\documentclass[letterpaper, 10 pt, conference]{ieeeconf}  
\IEEEoverridecommandlockouts                              
\usepackage{amsmath,amssymb,amsfonts,mathtools}
\usepackage{amsthm}
\usepackage{booktabs}
\let\labelindent\relax                 
\usepackage{comment}
\usepackage{enumitem}
\usepackage[table]{xcolor}
\usepackage{hyperref}
\usepackage{microtype}
\usepackage{newtxtext,newtxmath} 
\usepackage{algorithm}
\usepackage{algpseudocode}
\usepackage{multirow}
\usepackage{cleveref}
\usepackage{pgfplots}
\pgfplotsset{compat=1.18}
\usepackage{tikz}
\usetikzlibrary{positioning,arrows.meta,calc,shadows}
\pdfmapfile{=pdftex.map}

\usepackage{import}
\usepackage{xifthen}
\usepackage{pdfpages}
\usepackage{transparent}

\hypersetup{
  colorlinks=true,
  linkcolor=blue!60!black,
  citecolor=blue!60!black,
  urlcolor=blue!60!black
}

\newtheorem{theorem}{Theorem}
\newtheorem{proposition}{Proposition}
\newtheorem{lemma}{Lemma}

\theoremstyle{definition}
\newtheorem{definition}{Definition}
\theoremstyle{plain}
\newtheorem{example}{Example}
\newtheorem{problem}{Problem}
\newtheorem{remark}{Remark}

\renewcommand{\paragraph}[1]{\vspace{0.5ex}\noindent\textbf{#1}}

\newcommand{\R}{\mathbb{R}}

\DeclareMathOperator{\enc}{enc} 
\DeclareMathOperator{\dec}{dec} 
\newcommand{\Prob}{\mathbb{P}}
\newcommand{\PredLib}{\mathcal{P}}
\newcommand{\SpecFam}{\mathcal{F}}
\newcommand{\Iset}{\mathcal{I}}

\newcommand{\basis}{{\mathcal{B}}}
\newcommand{\Aset}{\mathcal{A}}
\newcommand{\Kmax}{K_{\max}}
\newcommand{\hor}{\mathrm{hor}}
\newcommand{\N}{\mathbb{N}} 
\newcommand{\supp}{\mathrm{supp}}

\newcommand{\suppU}{\mathrm{supp}^{\PredLib}}
\newcommand{\safe}{\ensuremath{\textsc{Safe}}}

\newcommand{\uncertain}{\ensuremath{\textsc{Uncertain}}}
\newcommand{\pAlways}{\boxdot}
\newcommand{\pEventually}{\Diamonddot}
\newcommand{\qhat}{\widehat{Q}}
\newcommand{\x}{{x}} 
\newcommand{\X}{\mathbb{X}} 
\renewcommand{\O}{\mathbb{O}} 
\newcommand{\Z}{\mathbb{Z}} 

\definecolor{grayfilling}{gray}{0.95} 
\definecolor{grayshadow}{gray}{0.5} 
\definecolor{mydarkblue}{HTML}{003a7d}
\definecolor{mypurple}{HTML}{c701ff}
\definecolor{mymidblue}{HTML}{008dff}
\colorlet{myblue}{mymidblue}
\definecolor{mygreen}{HTML}{4ecb8d}
\definecolor{myorange}{HTML}{ff9d3a}
\definecolor{myyellow}{HTML}{f9e858}
\definecolor{myred}{HTML}{d83034}
\colorlet{accent}{mypurple}

\newcommand{\Oliver}[1]{{\color{red}[Oliver: #1]}}

\title{\LARGE\bf 
Vision-Based Runtime Monitoring under Varying Specifications using Semantic Latent Representations}
\author{
Bardh Hoxha$^{1}$, Oliver Schön$^{2}$, Hideki Okamoto$^{1}$, Lars Lindemann$^{2}$, Georgios Fainekos$^{1}$%
\thanks{$^{1}$B. Hoxha, H. Okamoto, and G. Fainekos are with Toyota NA R\&D}%
\thanks{$^{2}$Oliver Schön and Lars Lindemann are with ETH Z\"urich}
}

\begin{document}
\maketitle
\thispagestyle{empty}
\pagestyle{empty}

\begin{abstract}
We study certified runtime monitoring of past-time signal temporal logic (ptSTL) from visual observations under partial observability. The monitor must infer safety-relevant quantities from images and provide finite-sample guarantees, while being \emph{reusable}: once trained and calibrated, it should certify any formula in a target fragment without per-formula retraining. For fragments induced by a finite dictionary of temporal atoms, we prove that the \emph{semantic basis}, the vector of atom robustness scores, is the minimum prediction target within the class of monotone, 1-Lipschitz reusable interfaces: any formula is evaluated by a deterministic decoder derived from the parse tree, and a single conformal calibration pass certifies the entire fragment with no union bound. We also introduce a \emph{rolling prediction monitor} that predicts only current predicate values and reconstructs temporal history online; this is easier to learn but grows conservative at long horizons. On a pedestrian-crossroad benchmark, rolling achieves tighter certified bounds at short horizons while the semantic-basis monitor is up to 4-times tighter at long horizons. We validate the presented monitors on real-world Waymo driving data, where both monitors satisfy the conformal coverage guarantee empirically.
\end{abstract}

\section{INTRODUCTION}\label{sec:intro}

Runtime monitors provide a mechanism for assessing whether specified safety conditions are satisfied during deployment of autonomous systems. In practice, these conditions are often not fixed once and for all: different missions may require different safety and performance specifications, and operators may update the specifications used at deployment. Accordingly, a practical runtime monitor should be \emph{reusable} (see Fig.~\ref{fig:overview}): after training and calibration, it should support certification for a range of specifications in a target fragment without requiring retraining for each new specification. This motivates a \emph{semantic interface} between observations and specifications: a pre-trained encoder produces a fixed intermediate representation, and formula-specific values are then computed at query time by a deterministic, analytically derived decoder, without additional learning.



Partial observability introduces an additional challenge. The monitor has access only to visual observations, while safety predicates are defined over latent physical quantities (e.g., distances, velocities, and clearances) but the monitor observes only images. Safety-relevant quantities must therefore be inferred from pixels, and the resulting prediction uncertainty must be accounted for in the certificate. This paper combines the problems of reusable specification monitoring and certified learning under partial observability.

A formula-specific certified baseline is to predict the satisfaction measure of a fixed formula and apply conformal prediction to obtain a certified lower bound~\cite{lindemann2023conformal}. This works, but offers no reuse: when the specification changes, both the predictor and the calibration are tied to that formula. To avoid this limitation, the monitor must predict a \textit{reusable} intermediate representation. The choice of representation determines the scope of reuse, the difficulty of visual prediction, and the tightness of the resulting conformal bounds.

\begin{figure*}[t]
    \centering
    \def\svgwidth{.85\linewidth}
    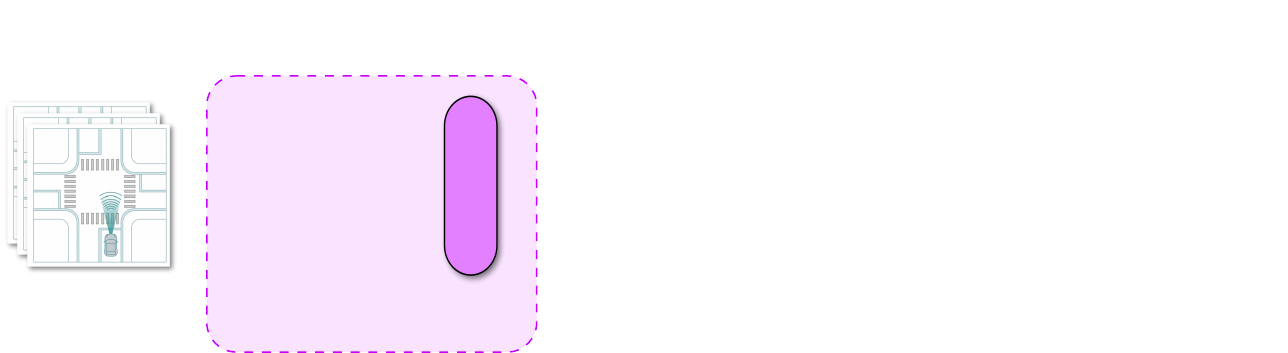

    \caption{\emph{Left:} A single trained encoder $\enc_\theta$ maps vision inputs to a latent interface $\basis_t$ from which formula-specific decoders evaluate any formula $\varphi\in\SpecFam$ in a target fragment $\SpecFam$. \emph{Right:} Two choices of interface basis.}
    \label{fig:overview}
    \vspace{-10pt}
\end{figure*}



We investigate two reusable monitoring interfaces. The first predicts all safety-relevant quantities at each timestep within the specification’s look-back window. This is maximally flexible, since any temporal property can be evaluated from it, but the encoder must regress a high-dimensional output whose size grows with window length. For a specification fragment over a fixed set of temporal operators such as “always safe over the last $K$ steps” or “eventually reach the goal within $K$ steps”, we prove that a strictly smaller representation, the \emph{semantic basis}, suffices to evaluate every specification in the family, and that no smaller representation can.


We use \emph{conformal prediction} (CP)~\cite{lindemann2023conformal,cairoli2023conformal,zhao2024robust} to convert prediction residuals into certified lower bounds, and show that the tightness of these bounds depends critically on whether calibration is applied before or after temporal aggregation in the decoder.

\medskip

\noindent\textbf{Contributions:}
\begin{enumerate}[leftmargin=1.5em]
    \item \emph{Semantic basis as a reusable interface (Section~\ref{sec:exact}):}
    For any ptSTL fragment induced by a finite atomic dictionary, we prove that the semantic basis is minimal within the class of monotone, $1$-Lipschitz reusable interfaces.
    

    \item \emph{Compositional conformal certification (Section~\ref{sec:conformal}):}
    Because every formula is decoded by a monotone, 1-Lipschitz function, single atom-wise conformal bounds certify the entire fragment simultaneously (Thm.~\ref{thm:validity}).

    \item \emph{Rolling prediction monitor:}
    We introduce a rolling prediction monitor that updates the predicate basis online. This reduces the encoder dimension substantially, making the learning problem substantially easier. 
    We provide empirical evidence that this results in higher prediction accuracy and tighter conformal bounds at short horizons.
\end{enumerate}

\medskip


\section{RELATED WORK}
\label{sec:related}
Signal temporal logic (STL) provides a framework for specifying and evaluating temporal properties of continuous-valued signals, with robustness semantics quantifying distance from violation~\cite{fainekos2009robustness,deshmukh2017robust}. Efficient online monitoring algorithms for past-time STL are well established~\cite{dokhanchi2014line,yamaguchi2024rtamt}. We build on these semantics while addressing partial observability through learned vision models.

Conformal prediction (CP) has been used in STL monitoring to provide finite-sample coverage guarantees~\cite{lindemann2023conformal,cairoli2023conformal,zhao2024robust}.  Beyond monitoring, CP has also been used in safe planning and prediction~\cite{lindemann2023safe,dixit2023adaptive}; unlike these works, we study reusable fragment-wide certification from a single calibration pass.

Neural monitoring under partial observability has been studied for fixed properties and small template families~\cite{bortolussi2019neural,cairoli2021neural}. We extend this setting to certify an entire $\wedge/\vee$-closed fragment from a single predictor and characterize the minimal interface required to do so.

Compositional uncertainty-aware STL semantics have been studied under partial and uncertain observations. Robust satisfaction intervals for partial traces were introduced in~\cite{deshmukh2017robust}; interval-valued semantics that propagate uncertainty through STL operators were developed in~\cite{zhong2021extending,baird2023interval}; and a related setting based on affine arithmetic and SMT is studied in~\cite{finkbeiner2022truly}. Our rolling and semantic-basis monitors build on this compositional viewpoint, adding a minimality result for the prediction target and an explicit calibration tradeoff analysis. In ~\cite{balakrishnan2021percemon,hekmatnejad2024formalizing}, the papers present monitoring for perception systems, but do not provide finite-sample certified ptSTL robustness bounds from learned latent visual representations.

\emph{Predictive state representations} (PSRs) construct minimal sufficient statistics for partially observable systems~\cite{littman2001predictive,singh2004PSR}: a linear PSR identifies the minimum-rank basis from which any observable test can be linearly decoded, without modeling a latent belief state explicitly. More recently, temporal logic specifications have been embedded directly into latent spaces via \emph{embedding temporal logic} (ETL), with satisfaction checked through learned distance thresholds, but without certified coverage bounds~\cite{kapoor2025pretrained}. Concept embedding models extend latent representations to probabilistic concept membership for concurrent concept reasoning, also without formal guarantees~\cite{desantis2025v}. Our semantic basis is the ptSTL-monitoring analogue of a PSR: the minimum statistic for which monotone, 1-Lipschitz decoders suffice over all signals, and it is precisely this Lipschitz restriction that enables the conformal certification in Section~\ref{sec:conformal}.

Adaptive conformal methods offer orthogonal improvements to trajectory-level tightness under distribution shift; integrating them with our compositional certification structure is a possible direction for future work~\cite{gibbs2021adaptive}.

\section{PROBLEM FORMULATION}
\label{sec:problem}

\subsection{Dynamical System and Observations}\label{sec:system}
We consider discrete-time dynamical systems with a state $x_t\in\X\subset\R^{d_x}$ evolving as
\begin{align}
    x_{t+1} &= f_X(x_t,u_t, v_t), \qquad
    o_{t} = f_O(x_t, w_t),\label{eq:system}
\end{align}
where $u_t\in\R^{d_u}$ is a control input and $v_t, w_t$ are noise terms. At each time $t\in\Z_{\ge 0}$, the state generates an observation $o_t\in\O\subset\R^{d_o}$; e.g., $o_t$ may be overhead camera images subject to different sensor nuisance. In this paper, the monitor has access only to the observation sequence $\{o_\tau\}_{\tau\le t}$; the state $x_t$ is never directly observed.

\begin{remark}
While we use the state-space model \eqref{eq:system} to fix notation, the monitoring framework requires only a discrete-time signal $x_{0:T}$ paired with an observation sequence $o_{0:T}$. No specific structure on $f_\X$ or $f_\O$ is assumed beyond the exchangeability of individual episodes (Section~\ref{sec:conformal}).
\end{remark}

\subsection{Temporal Logic Specifications}
\label{sec:ptSTL}

Let $\PredLib=\{\mu_1,\dots,\mu_m\}$ be a finite set of \emph{atomic predicates} over signals $\x\colon\Z_{\ge 0}\to\X$, where each $\mu_k\colon\X\times\Z_{\ge 0}\to\{\top,\bot\}$ is defined by a scalar predicate function $h_k\colon\X\to\R$ via 
$$\big(\mu_k(x,t)=\top \big) \Leftrightarrow h_k(x_t)\ge 0.$$ 
We write $\rho(\mu_k,\x,t):=h_k(x_t)$ for the \emph{robustness} of $\mu_k$.

\begin{definition}[ptSTL syntax and quantitative semantics]\label{def:ptSTL}
    \emph{Past-time STL} (ptSTL)~\cite{maler2004monitoring} formulas over predicates $\PredLib$ in \emph{positive normal form} (PNF) are given by the grammar
    \begin{align*}\label{eq:ptSTL_syntax}
        \varphi ::= \mu_k \mid \varphi_1\wedge\varphi_2 \mid \varphi_1\vee\varphi_2 \mid \pAlways_{[a,b]}\,\varphi \mid \pEventually_{[a,b]}\,\varphi,
    \end{align*}
    where $\mu_k\in\PredLib$, $\varphi_1,\varphi_2$ are ptSTL formulas, and $[a,b]\subseteq\Z_{\ge 0}$. The \emph{robustness} $\rho(\varphi,\x,t)\in\R$ is defined recursively over a signal $\x\colon\Z_{\ge 0}\to\X$ at time $t\in \Z_{\ge 0}$ as:
    \begin{align*}
        \rho(\mu_k, x, t) &= h_k(x_t),\\
        \rho(\varphi_1\wedge\varphi_2,\x,t) &= \min\big\lbrace\rho(\varphi_1,\x,t),\,\rho(\varphi_2,\x,t)\big\rbrace,\\
        \rho(\varphi_1\vee\varphi_2,\x,t) &= \max\big\lbrace\rho(\varphi_1,\x,t),\,\rho(\varphi_2,\x,t)\big\rbrace,\\
        \rho(\pAlways_{[a,b]}\varphi,\x,t) &= \inf_{t'\in[t-b,t-a]}\rho(\varphi,\x,t'),\\
        \rho(\pEventually_{[a,b]}\varphi,\x,t) &= \sup_{t'\in[t-b,t-a]}\rho(\varphi,\x,t').
    \end{align*}
    A signal $\x$ satisfies a formula $\varphi$ at time $t$ iff $\rho(\varphi,\x,t)\ge 0$.
\end{definition}

We define the \emph{formula horizon} $\hor(\varphi)\in\N$ as the largest backward time lag needed to evaluate $\varphi$ at time $t$, so that $\rho(\varphi,\x,t)$ depends only on $\x_{t-\hor(\varphi):t}\in\R^{d_x\times(\hor(\varphi)+1)}$. 

\begin{remark}[Why past-time STL]
Restricting to past-time formulas is natural for online monitoring: evaluation at time $t$ depends only on the finite history $\x_{t-\hor(\varphi):t}$, requiring no prediction of future states. Moreover, the $\wedge/\vee$-closed ptSTL fragment has a monotone, 1-Lipschitz algebraic structure used to enable tight compositional conformal bounds (Section~\ref{sec:exact}). 
\end{remark}

\subsection{Induced Specification Fragments}
We now define the class of specifications addressed in this paper. The key idea is to fix a finite dictionary of \emph{temporal atoms}, namely base ptSTL formulas that serve as irreducible generators, and close it under conjunction and disjunction (see Fig.~\ref{fig:fragment}).
The result is a fragment of ptSTL with a rich algebraic structure that admits tight compositional conformal certification via the semantic basis introduced in Section~\ref{sec:exact}.

\begin{figure}[b]
\centering
\begin{tikzpicture}[
    >=stealth,
    ap/.style={draw, fill= white, circle, minimum size=6mm, inner sep=1pt, font=\small, drop shadow},
    atom/.style={draw, fill=accent!30!white, circle, minimum size=10mm, inner sep=1pt, font=\small, drop shadow},
    mol/.style={draw, fill=white, rounded corners=3pt, minimum height=7mm, inner sep=3pt, font=\small, drop shadow},
    lbl/.style={font=\scriptsize, text=gray}
]
\node[ap] (p1) at (0,0)    {$\mu_1$};
\node[ap] (p2) at (1.1,0)  {$\mu_2$};
\node[ap] (pk) at (2.5,0)  {$\mu_k$};
\node[ap] (pm) at (3.6,0)  {$\mu_m$};
\node[lbl] at (1.8, 0) {$\cdots$};
\node[lbl, left=2pt of p1] {Predicates $\PredLib$:};

\node[atom] (a1) at (0,   -1.6) {\scriptsize$\pAlways_I \mu_1$};
\node[atom] (a2) at (1.3, -1.6) {\scriptsize$\pEventually_I \mu_1$};
\node[atom] (a3) at (2.6, -1.6) {\scriptsize$\pAlways_{I'}\!\mu_k$};
\node[] (a4) at (3.9, -1.6) {\scriptsize$\cdots$};
\node[lbl, left=2pt of a1] {Temporal atoms $\Aset$:};

\draw[->] (p1) -- (a1);
\draw[->] (p1) -- (a2);
\draw[->] (pk) -- (a3);

\node[mol] (f1) at (0.65, -3.2) {$a_1\wedge a_2$};
\node[mol] (f2) at (2.4,  -3.2) {$a_2\vee a_3$};
\node[] (f3) at (3.8,  -3.2) {$\cdots$};
\node[lbl, left=2pt of f1] {Fragment $\SpecFam(\Aset)$:};

\draw[->, thick]        (a1) -- (f1) node[midway, left,  lbl, text=black] {$\wedge$};
\draw[->, thick]        (a2) -- (f1);
\draw[->, thick, dashed](a2) -- (f2) node[midway, right, lbl, text=black] {$\vee$};
\draw[->, thick, dashed](a3) -- (f2);

\end{tikzpicture}
\caption{Fragment structure. \emph{Top}: predicates $\mu_k\in\PredLib$.
\emph{Middle}: temporal atoms $a_q\in\Aset$, e.g., each applying a single past-time operator ($\pAlways_I$ or $\pEventually_I$) to one predicate.
\emph{Bottom}: induced fragment $\SpecFam(\Aset)$, formed by closing $\Aset$ under conjunction (solid, $\wedge$) and disjunction (dashed, $\vee$). 
}
\label{fig:fragment}
\end{figure}

\begin{definition}[Atomic dictionary and induced fragment]\label{def:fragment}
A finite set $\Aset=\{a_1,\dots,a_r\}$ of ptSTL formulas is an \emph{atomic dictionary}. The \emph{induced fragment} $\SpecFam(\Aset)$ is the smallest set containing $\Aset$ and closed under conjunction and disjunction:
\[
    \varphi\in\SpecFam(\Aset) \;\Longleftrightarrow\; \varphi ::= a_q \mid \varphi_1\wedge\varphi_2 \mid \varphi_1\vee\varphi_2,\quad a_q\in\Aset,
\]
with $\varphi_1,\varphi_2\in\SpecFam(\Aset)$.
The \emph{maximum horizon} of the fragment $\SpecFam(\Aset)$ is $\Kmax:=\max_{q}\,\hor(a_q)$.
\end{definition}

The elements of $\Aset$ are ``atomic'' in the sense that they are irreducible within $\SpecFam(\Aset)$: no formula in the fragment can be decomposed further below these atoms. Crucially, the choice of $\Aset$ is a design decision that determines the expressiveness of the fragment. Larger or richer dictionaries admit more complex specifications but require predicting a higher-dimensional semantic basis; see Section~\ref{sec:exact}.

\begin{example}\label{ex:depth_one_dictionary}
    In the experiments (Section~\ref{sec:experiments}), we use the depth-1 atomic dictionary
    \begin{equation}
        \Aset = \big\{\pAlways_I\,\mu_k,\;\pEventually_I\,\mu_k \mid \mu_k\in\PredLib,\;I\in\Iset\big\},\label{eq:depth_one_dictionary}
    \end{equation}
    where $\Iset$ is a finite set of time intervals, yielding $r=2m|\Iset|$ atoms. Each atom applies a single temporal operator to one predicate; the induced fragment then allows arbitrary $\wedge/\vee$ combinations of these temporal queries. The two-level structure is illustrated in Fig.~\ref{fig:fragment}.
\end{example}

In the following, we may suppress the dependence of $\SpecFam$ on $\Aset$ when the dictionary is clear from context, and write $\SpecFam$ to denote the target fragment for brevity.

\subsection{Problem Statement}\label{subsec:prob_form}

We consider vision-based monitors that, at each given time $t$, have access only to the observation history $\{o_\tau\}_{\tau=0}^t$ and not the true state history $\{\x_t\}_{\tau=0}^t$. In particular, we focus on monitors that operate on a learned latent representation of the observation history ($\basis_t$), which is a common approach in practice for vision-based systems; recall Fig.~\ref{fig:overview}.

To this end, an encoder $\enc_\theta\colon\O^{H} \to \R^{d_\basis}$ maps a sliding history of $H>0$ observations\footnote{Note that $H$ and $\Kmax$ are independent parameters: $\Kmax$ measures the length of the \emph{state} history required to evaluate formulas in the fragment $\SpecFam$, while $H$ measures the length of the \emph{observation} history fed into the encoder.} to a latent representation $\basis_t = \enc_\theta(o_{t-H+1:t}) \in \R^{d_\basis}$. At runtime, the monitor has access only to $\basis_t$; the physical state $x_t$ is never directly observed.

Let a dataset $\mathcal{D}=\{(o^i_{0:T_i},x^i_{0:T_i})\}_{i=1}^N$ with $N$ episodes of respective length $T_i>0$ drawn from the system in \eqref{eq:system} be split into training, calibration, and test episodes, and fix a target fragment $\SpecFam$ of temporal logic formulas (with maximum horizon $\Kmax$) and confidence level $1-\alpha\in[0,1]$.


\begin{problem}[Reusable Certified Online Monitoring]
      Construct a monitor that, at each valid time $t\ge \Kmax$, uses only the observation history $\{o_\tau\}_{\tau\le t}$ to output, for any queried formula $\varphi\in\SpecFam$, a
  certified lower bound $\underline\rho_t^\varphi\in\R$ such that:
      \begin{enumerate}[label=(\roman*),leftmargin=1.5em]
          \item \emph{Validity:} $\Prob(\underline\rho_t^\varphi \le \rho(\varphi,\x,t))\ge 1-\alpha$;
          \item \emph{Reusability:} a single trained encoder and a single calibration pass support all $\varphi\in\SpecFam$, with no per-formula retraining.
      \end{enumerate}
      \label{prob:main}
  \end{problem}

  We consider two instantiations of the validity guarantee, differing in what the probability in Problem~\ref{prob:main}(i) is taken over.
  \emph{Level-1} (episodewise): $\Prob$ is jointly over the $N$ calibration episodes and the test episode, and the bound holds simultaneously for all valid times $t$.
  \emph{Level-2} (random-time): $\Prob$ additionally includes a uniformly sampled evaluation time $\tau$ within the test episode, and the bound holds at $\tau$.

\section{SUFFICIENT STATISTICS FOR REUSABLE VISION-BASED MONITORING}
\label{sec:exact}


The central question of reusable monitoring is: \emph{what must the latent representation $\basis_t$ encode so that every formula in the target fragment $\SpecFam$ can be decoded from it by a monotone, 1-Lipschitz function, for any possible signal $\x$?} We seek the smallest such representation, thereby identifying the minimum prediction target needed to support the entire fragment. We restrict the decoder class to monotone, 1-Lipschitz functions, which are the natural choice for ptSTL because its robustness semantics are built from $\min$, $\max$, and coordinate projections, and because this is precisely the class that enables tight conformal certification (Section~\ref{sec:conformal}).

We present two choices of $\basis_t$ with complementary properties. (1) The \emph{predicate-history} basis $\basis^\PredLib_t$ is a \emph{fragment-agnostic} statistic: it supports every bounded-horizon ptSTL formula over $\PredLib$ without any prior knowledge of the target fragment. (2) The \emph{semantic basis} $\basis^\Aset_t$ is a \emph{fragment-specific} statistic: given a chosen fragment $\SpecFam(\Aset)$, it is the minimum representation from which every formula in the fragment admits a monotone, 1-Lipschitz decoder, uniformly over all signals.

\subsection{Predicate-History Basis}

The \emph{predicate history} collects the robustness values of all atomic predicates $\PredLib$ over the full fragment horizon $\Kmax$:
\begin{equation}
    \basis^\PredLib_t
    :=
    \big(\mu_k(\x,t-j)\big)_{k=1,\dots,m,\; j=0,\dots,\Kmax}
    \in \R^{m(\Kmax+1)}.\label{eq:predicate_history_basis}
\end{equation}
It is fragment-agnostic in the following sense.

\begin{proposition}[Predicate history factorization]\label{prop:universal}
    For any ptSTL formula $\varphi$ in PNF with predicates $\PredLib$ and $\hor(\varphi)\le\Kmax$, and any $t\ge \Kmax$, there exists a monotone, 1-Lipschitz (under $\|\cdot\|_\infty$) decoder $\dec_\varphi\colon\R^{m(\Kmax+1)}\to\R$ such that $$\rho(\varphi,\x,t)=\dec_\varphi(\basis^\PredLib_t).$$
\end{proposition}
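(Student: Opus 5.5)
The plan is structural induction on the PNF parse tree of $\varphi$. The induction hypothesis is slightly stronger than the statement, so that it survives the shifts introduced by temporal operators. For every subformula $\psi$ and every lag $s\ge 0$ with $s+\hor(\psi)\le \Kmax$, there should be a monotone, 1-Lipschitz (w.r.t.\ $\|\cdot\|_\infty$) map $\dec_{\psi,s}\colon\R^{m(\Kmax+1)}\to\R$ such that
$$\rho(\psi,\x,t-s)=\dec_{\psi,s}(\basis^\PredLib_t)\quad\text{for all } t\ge\Kmax \text{ and all signals } \x.$$
Taking $\psi=\varphi$ and $s=0$ gives the proposition. Here I read the entries of $\basis^\PredLib_t$ as the robustness values $h_k(x_{t-j})=\rho(\mu_k,\x,t-j)$, as in the text preceding \eqref{eq:predicate_history_basis}. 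The condition $t\ge\Kmax$ ensures that all indices $t-j$ are nonnegative.

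\emph{Base case.} For $\psi=\mu_k$ we have $\hor(\mu_k)=0$, so $s\le\Kmax$. Define $\dec_{\mu_k,s}$ as the coordinate projection onto entry $(k,s)$. A coordinate projection is monotone and satisfies $|z_{k,s}-z'_{k,s}|\le\|z-z'\|_\infty$, so it is 1-Lipschitz.

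\emph{Boolean case.} For $\psi=\psi_1\wedge\psi_2$ (resp.\ $\vee$), note that $\hor(\psi)=\max(\hor(\psi_1),\hor(\psi_2))$. The hypothesis therefore applies to both children at the same lag $s$. Set $\dec_{\psi,s}=\min(\dec_{\psi_1,s},\dec_{\psi_2,s})$ (resp.\ $\max$). Monotonicity is preserved because min and max are coordinatewise nondecreasing. The Lipschitz property follows from the elementary bound $|\min(a,b)-\min(a',b')|\le\max(|a-a'|,|b-b'|)$, and the same bound holds for max.

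\emph{Temporal case.} For $\psi=\pAlways_{[a,b]}\psi_1$ (resp.\ $\pEventually$), the semantics give
$$\rho(\psi,\x,t-s)=\min_{j\in[a,b]}\rho(\psi_1,\x,t-s-j).$$
Because the interval is a finite integer set, this infimum is a minimum. The needed lags are $s+j$ with $j\le b$. Using $\hor(\psi)=b+\hor(\psi_1)$, they satisfy $s+j+\hor(\psi_1)\le s+\hor(\psi)\le\Kmax$, so the hypothesis supplies $\dec_{\psi_1,s+j}$ for each of them. Define $\dec_{\psi,s}=\min_{j\in[a,b]}\dec_{\psi_1,s+j}$ (resp.\ $\max$). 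This is a finite min/max of monotone 1-Lipschitz maps, so it is again monotone and 1-Lipschitz by the same bound as in the Boolean case, extended to finitely many arguments.

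\emph{Main obstacle.} The only delicate point is the horizon bookkeeping in the temporal case. I need the recursive characterization $\hor(\mu_k)=0$, $\hor(\varphi_1\circ\varphi_2)=\max$ of the children's horizons, and $\hor(\pAlways_{[a,b]}\varphi)=\hor(\pEventually_{[a,b]}\varphi)=b+\hor(\varphi)$. The paper only defines $\hor$ informally as the largest backward lag, so I will make this recursive characterization explicit first. With it in hand, the shifted indices never leave the window $\{0,\dots,\Kmax\}$ stored in $\basis^\PredLib_t$. That is exactly why the induction must carry the lag $s$ rather than work at $s=0$ alone.
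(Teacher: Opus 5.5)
Your proof is correct and follows the paper's own approach: structural induction on the parse tree, building $\dec_\varphi$ from coordinate projections, $\min$, and $\max$, each of which is monotone and 1-Lipschitz under $\|\cdot\|_\infty$. Your lag-indexed induction hypothesis and the explicit recursive horizon bookkeeping make rigorous the index-shifting that the paper's one-line sketch leaves implicit.
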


\begin{proof}
Every predicate robustness value $\mu_k(\x,t-j)$ appearing in the evaluation of $\varphi$ is a coordinate of $\basis^\PredLib_t$. The decoder $\dec_\varphi$ is constructed by structural induction, composing coordinate projections, $\min$, and $\max$ according to the parse tree of $\varphi$. Each of these operations is monotone and 1-Lipschitz under $\|\cdot\|_\infty$, implying the result.
\end{proof}

The predicate history is the natural fragment-agnostic baseline: it can be predicted once and then decoded to any formula at query time, with no knowledge of the target fragment (within ptSTL over $\PredLib$) required at training or calibration time. Its dimension $m(\Kmax+1)$ grows linearly with the number of predicates and the fragment horizon, making it the most expensive representation we consider.

This motivates asking whether a smaller representation suffices when the target fragment is known. The following subsection answers this precisely.

\subsection{Semantic Basis}

Suppose a target fragment $\SpecFam(\Aset)$ has been fixed (Definition~\ref{def:fragment}). Rather than retaining the full predicate history \eqref{eq:predicate_history_basis}, we ask whether a smaller statistic suffices to evaluate every formula in $\SpecFam(\Aset)$. The answer is yes: it is enough to retain the robustness values of the atoms $a \in \Aset$. We call this basis the \emph{semantic basis} $\basis^\Aset_t$ which is the minimum in an information-theoretic sense (Definition~\ref{def:info_order}).

\begin{definition}[Semantic basis]\label{def:semantic_basis}
For an atomic dictionary $\Aset = \{a_1,\dots,a_r\}$, the \emph{semantic basis} is
\begin{equation}
    \basis^\Aset_t := \big(\rho(a_q,\x,t)\big)_{q=1}^r \in \R^r.\label{eq:semantic_basis}
\end{equation}
\end{definition}

The semantic basis stores exactly one robustness value per atom in $\Aset$. In general, a basis $\basis_t$ \emph{supports} $\SpecFam(\Aset)$ if every $\varphi\in\SpecFam(\Aset)$ admits a monotone, 1-Lipschitz decoder $\dec_\varphi$ satisfying $\rho(\varphi,\x,t)=\dec_\varphi(\basis_t)$ for all signals $\x$ and all times $t\geq \Kmax$. Uniformity over signals is crucial: without it, the downstream conformal guarantees would not transfer beyond the calibration distribution. To state the minimality claim precisely, we compare bases by their information content.

\begin{definition}[Information order]\label{def:info_order}     
  For two deterministic statistics $\basis_t^{(1)}=T_1(\basis^\PredLib_t)$ and $\basis_t^{(2)}=T_2(\basis^\PredLib_t)$, where $T_1,T_2$ are arbitrary deterministic maps, write $\basis_t^{(1)} \preceq              
  \basis_t^{(2)}$ if there exists a deterministic map $h$ such that $\basis_t^{(1)} = h(\basis_t^{(2)})$ for all signals $\x$ and all valid times $t\ge\Kmax$. Then $\basis_t^{(2)}$ is \emph{at least as informative
   as} $\basis_t^{(1)}$.  
\end{definition}

The semantic basis $\basis^\Aset_t$ is the minimum of this order among all statistics that support $\SpecFam(\Aset)$.

\begin{theorem}[Minimality of the semantic basis]
\label{thm:minimal_basis}
Let $\Aset=\{a_1,\dots,a_r\}$ and let $\SpecFam(\Aset)$ be its $\wedge/\vee$-closure.
\begin{enumerate}[label=(\roman*),leftmargin=1.5em]
\item For every $\varphi\in\SpecFam(\Aset)$, there exists a monotone, 1-Lipschitz (under $\|\cdot\|_\infty$) decoder $\dec_\varphi\colon\R^r\to\R$ such that $\rho(\varphi,\x,t)=\dec_\varphi(\basis^\Aset_t)$.
\item $\basis^\Aset_t$ is the minimum statistic: for every $\basis_t$ that supports $\SpecFam(\Aset)$, we have $\basis^\Aset_t \preceq \basis_t$.
\end{enumerate}
\end{theorem}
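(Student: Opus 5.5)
Part (i) follows by structural induction on $\varphi\in\SpecFam(\Aset)$, mirroring the proof of Proposition~\ref{prop:universal} but with atoms rather than predicates as leaves.

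For the base case $\varphi=a_q$, we take $\dec_{a_q}(b)=b_q$, the $q$-th coordinate projection. Since $|b_q-b'_q|\le\|b-b'\|_\infty$, this map is monotone and 1-Lipschitz. By Definition~\ref{def:semantic_basis} it returns $\rho(a_q,\x,t)$.

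For the inductive step $\varphi=\varphi_1\wedge\varphi_2$, we set $\dec_\varphi=\min\{\dec_{\varphi_1},\dec_{\varphi_2}\}$, and for $\varphi=\varphi_1\vee\varphi_2$ we use $\max$. Correctness comes directly from the min/max clauses of Definition~\ref{def:ptSTL}. Monotonicity is preserved because $\min$ and $\max$ are coordinatewise nondecreasing. The Lipschitz property follows from the elementary inequality
\[
|\min\{u_1,u_2\}-\min\{v_1,v_2\}|\le\max\{|u_1-v_1|,|u_2-v_2|\},
\]
and its analogue for $\max$. Combined with the induction hypothesis, this gives $|\dec_\varphi(b)-\dec_\varphi(b')|\le\|b-b'\|_\infty$.

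Part (ii) is where the content lies, and it is easier than it looks. Let $\basis_t$ support $\SpecFam(\Aset)$. Each atom $a_q$ is itself an element of $\SpecFam(\Aset)$, so by the definition of support there exists a decoder $\dec_{a_q}$, in fact monotone and 1-Lipschitz although only determinism is needed, such that
\[
\rho(a_q,\x,t)=\dec_{a_q}(\basis_t)
\]
for all signals $\x$ and all $t\ge\Kmax$. We then define
\[
h(b):=\big(\dec_{a_1}(b),\dots,\dec_{a_r}(b)\big).
\]
This $h$ is a deterministic map with $h(\basis_t)=\basis^\Aset_t$ uniformly over signals and valid times. That is exactly the condition $\basis^\Aset_t\preceq\basis_t$ in Definition~\ref{def:info_order}.

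To complete the minimum claim, I would also check that $\basis^\Aset_t$ itself supports the fragment, which is part (i). I would also check that it lies in the comparison class of Definition~\ref{def:info_order}, namely that it is a deterministic function of $\basis^\PredLib_t$. This holds because each atom has horizon at most $\Kmax$, so Proposition~\ref{prop:universal} expresses $\rho(a_q,\x,t)$ as $\dec_{a_q}(\basis^\PredLib_t)$.

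The main obstacle is essentially definitional rather than technical. The key point is that the uniformity of support over all signals is what makes $h$ well defined. If two signals gave the same $\basis_t$ but different atom values, no single deterministic decoder could exist. I would state this explicitly to justify that $h$ is a function on the range of $\basis_t$.

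I would add a remark that $\preceq$ is only a preorder, so "minimum" means minimum up to equivalence. I would also note that the monotone, 1-Lipschitz restriction plays no role in (ii); it matters only in (i) and in the later conformal section.
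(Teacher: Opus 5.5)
Your proposal is correct and follows the paper's proof: structural induction with coordinate projections, $\min$, and $\max$ for (i), and stacking the atom decoders that support guarantees to obtain the map $h$ for (ii). Your additional checks are details the paper leaves implicit and do not change the argument: that $\basis^\Aset_t$ lies in the comparison class via Proposition~\ref{prop:universal}, and that $\preceq$ is only a preorder.
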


\begin{proof}
For (i), define $\dec_\varphi$ recursively: $\dec_{a_q}(b)=b_q$, $\dec_{\varphi_1\wedge\varphi_2}=\min\{\dec_{\varphi_1},\allowbreak\dec_{\varphi_2}\}$, $\dec_{\varphi_1\vee\varphi_2}=\max\{\dec_{\varphi_1},\allowbreak\dec_{\varphi_2}\}$. Monotonicity and 1-Lipschitz continuity follow because $\min$, $\max$, and coordinate projections have these properties under $\|\cdot\|_\infty$. For (ii), by (i) $\basis^\Aset_t$ supports $\SpecFam(\Aset)$. For minimality, each atom $a_q$ belongs to $\SpecFam(\Aset)$, so any $\basis_t$ that supports $\SpecFam(\Aset)$ admits a decoder $\overline{\dec}_q$ with $\rho(a_q,\x,t)=\overline{\dec}_q(\basis_t)$. Stacking gives $\basis^\Aset_t=\overline{\dec}(\basis_t)$, so $\basis^\Aset_t\preceq \basis_t$.
\end{proof}

Theorem~\ref{thm:minimal_basis} establishes that, for a fixed atomic dictionary $\Aset$ and decoder class restricted to monotone $1$-Lipschitz maps under $\|\cdot\|_\infty$, the semantic basis is the smallest prediction target that supports the entire fragment $\SpecFam(\Aset)$.

The predicate-history basis \eqref{eq:predicate_history_basis} recovers the semantic basis \eqref{eq:semantic_basis} in the limit; e.g., consider $\Aset$ in \eqref{eq:depth_one_dictionary} with degenerate intervals $\Iset=[j,j]$ for all $j\in\{0,\dots,\Kmax\}$, then $\basis^\Aset_t = \basis^\PredLib_t$, i.e., the two coincide and no compression is possible.

\section{CONFORMAL CERTIFICATION/CALIBRATION}
\label{sec:conformal}

The previous sections assume exact knowledge of $\basis_{t}\in\{\basis^\PredLib_{t},\basis^\Aset_{t}\}$. In practice, the encoder predicts an estimate $\widehat \basis_{t}$ from images (Fig.~\ref{fig:overview}), so prediction errors propagate into the decoded robustness predictions. We use CP to turn error bounds on the basis coordinates $\basis_{t,\ell}$ into valid lower bounds on the robustness of any formula in the fragments $\SpecFam(\PredLib)$ and $\SpecFam(\Aset)$, respectively. The key is that the monotone, 1-Lipschitz decoder structure allows us to certify all formulas simultaneously from a single set of conformal bounds on the basis elements $\basis_{t,\ell}$.

To this end, for each basis coordinate $\ell$, define the one-sided overestimation error $e_{t,\ell} := \max(0, \widehat \basis_{t,\ell} - \basis_{t,\ell})$ and let $\sigma_\ell > 0$ be a coordinatewise scaling factor. Based on this, we define the \emph{fragment-wide score} 
\begin{equation}
    s^{\SpecFam}(t) := \max_\ell\, \frac{e_{t,\ell}}{\sigma_\ell}.\label{eq:fragment_wide_score}
\end{equation}
For a formula $\varphi$, the \emph{active-support score} restricts to its basis coordinates: 
\begin{equation}
    s^\varphi(t) := \max_{\ell \in \supp(\varphi)} \frac{e_{t,\ell}}{\sigma_\ell} \le s^{\SpecFam}(t),\label{eq:active_support_score}
\end{equation}
where $\supp(\varphi)
$ denote the set of atom indices on which $\dec_\varphi$ depends. Errors in atoms outside $\supp(\varphi)$ do not affect the decoded robustness.
This can also be seen in Fig.~\ref{fig:fragment}, where each formula in the fragment $\SpecFam$ is associated with a subset of the basis coordinates through its support, and the score for each formula is the maximum normalized error over its active coordinates.

\begin{proposition}[Active-support error bound]
\label{prop:active}
For any estimate $\widehat \basis_t \in \R^r$ of basis $\basis_t$ and any $\varphi\in\SpecFam$ in the associated fragment, we have
\[
\big|\, \rho(\varphi,\x,t)-\dec_\varphi(\widehat \basis_t) \,\big|
\le
\max_{i\in\supp(\varphi)} |\basis_{t,i}-\widehat \basis_{t,i}|.
\]
\end{proposition}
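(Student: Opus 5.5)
The plan is to combine the exact factorization from Theorem~\ref{thm:minimal_basis}(i) (respectively Proposition~\ref{prop:universal} for the predicate-history basis) with a 1-Lipschitz estimate for the decoder restricted to its support. First, rewrite the true robustness as $\rho(\varphi,\x,t)=\dec_\varphi(\basis_t)$. The left-hand side then becomes $|\dec_\varphi(\basis_t)-\dec_\varphi(\widehat\basis_t)|$, and the claim reduces to a statement about the decoder alone:
\[
|\dec_\varphi(b)-\dec_\varphi(b')|\le\max_{i\in\supp(\varphi)}|b_i-b'_i|\qquad\text{for all } b,b'\in\R^r.
\]

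I would prove this refined Lipschitz property by structural induction on the parse tree of $\varphi$, using the recursive decoder construction from the proof of Theorem~\ref{thm:minimal_basis}(i). The support is defined along the tree: $\supp(a_q)=\{q\}$, and $\supp(\varphi_1\wedge\varphi_2)=\supp(\varphi_1\vee\varphi_2)=\supp(\varphi_1)\cup\supp(\varphi_2)$. This set contains every coordinate on which $\dec_\varphi$ depends, so the statement also holds with the paper's definition of $\supp$ as long as we use the syntactic support. (If the support were taken as the exact dependency set, which may be smaller, a separate remark would be needed. Since $\dec_\varphi$ is constant in any coordinate outside the exact set, we can replace $b'$ by the vector that agrees with $b$ off that set without changing either decoder value, and then apply the bound.)

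\emph{Base case.} For $\varphi=a_q$, we have $|b_q-b'_q|=\max_{i\in\{q\}}|b_i-b'_i|$.

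\emph{Inductive step.} Take $\varphi=\varphi_1\wedge\varphi_2$ and use the elementary inequality
\[
|\min\{u_1,u_2\}-\min\{v_1,v_2\}|\le\max\{|u_1-v_1|,|u_2-v_2|\},
\]
which holds because $\min$ is monotone and commutes with adding a constant. Applying the induction hypothesis to each $|u_j-v_j|$ bounds the right-hand side by the maximum over $\supp(\varphi_1)\cup\supp(\varphi_2)$. The case of $\max$ is symmetric.

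Neither step is a serious obstacle. The only point needing care is the precise meaning of $\supp(\varphi)$ and making sure the inequality is taken over the support rather than over all $r$ coordinates. The syntactic definition together with the constancy argument above handles this. For the predicate-history case the same induction applies, with the leaves now being coordinate projections $\mu_k(\x,t-j)$ combined through the temporal $\inf/\sup$ operators, which are finite $\min/\max$ operations.
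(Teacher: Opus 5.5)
Your proposal is correct and is essentially the paper's argument. The paper's proof notes only that $\dec_\varphi$ is 1-Lipschitz under $\|\cdot\|_\infty$ and depends only on the coordinates in $\supp(\varphi)$; your structural induction gives the first fact, and your step of replacing $b'$ off the support gives the second, with the careful treatment of syntactic versus exact dependency support (e.g.\ absorption in $a_1\wedge(a_1\vee a_2)$) being a detail the paper leaves implicit.
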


\begin{proof}
$\dec_\varphi$ is 1-Lipschitz under $\|\cdot\|_\infty$ and depends only on coordinates in $\supp(\varphi)$.
\end{proof}

\begin{example}
In our experiments, the atomic dictionary is
$\Aset$ defined in \eqref{eq:depth_one_dictionary}
with $m=7$ predicates, $\Iset=\{[0,1],[0,2],[0,4],[0,8],[0,16]\}$, and $\Kmax=16$. The resulting semantic basis has $r = 2m|\Iset| = 70$ coordinates; the predicate history has $m(\Kmax+1) = 119$. This gives a $41\%$ reduction in representation size for the same target fragment.

\begin{figure}[th]
    \centering    \includegraphics[width=0.8\linewidth]{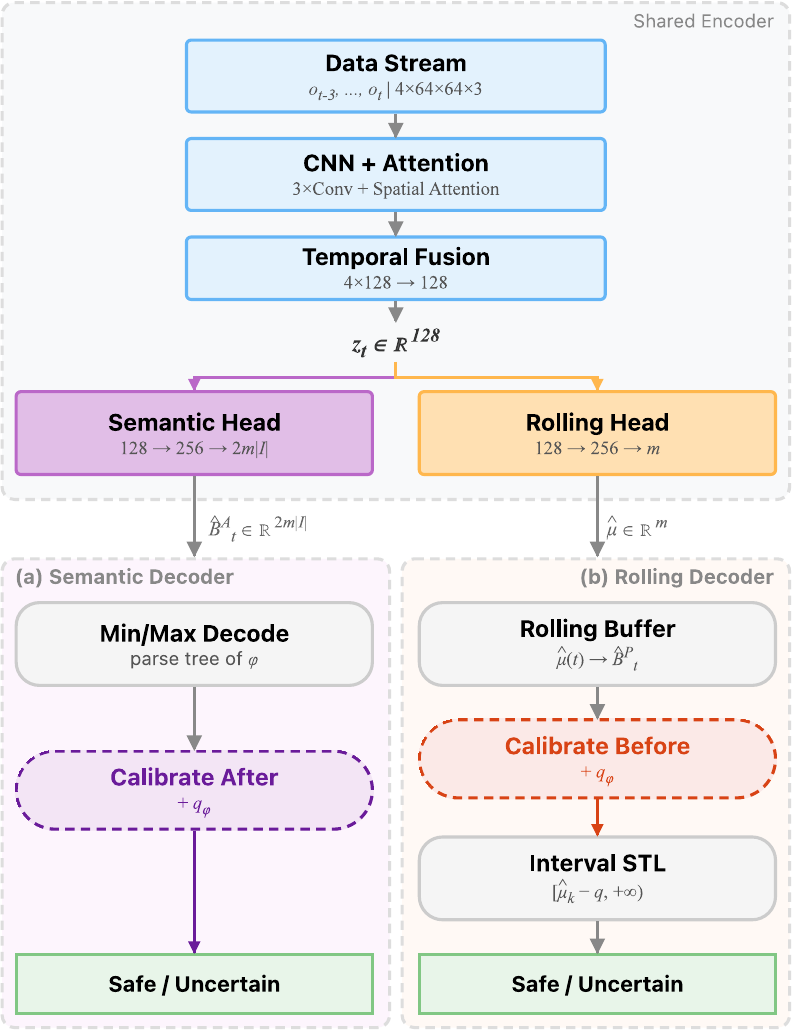}
    \caption{Monitor architectures for both benchmarks. Both share a CNN+attention encoder and temporal fusion module. \textbf{(a)}~The rolling monitor predicts $m$ current-timestep predicates, accumulates them in a streaming buffer,
and applies the conformal radius $q_\varphi$ \emph{before} temporal composition (interval STL). \textbf{(b)}~The semantic-basis monitor predicts $2m|\Iset|$ temporal atoms directly; $q_\varphi$ is applied 
\emph{after} composition via the formula's parse tree.}
    \label{fig:arch}
    \vspace{-10pt}
\end{figure}

As a concrete instance, the safety specification $\varphi_{\mathrm{safe}} = \pAlways_{[0,K]}\mu_{\mathrm{clear}}$ has decoder $\dec_{\varphi_{\mathrm{safe}}}(\basis^\Aset_t) = \rho(\pAlways_{[0,K]}\mu_{\mathrm{clear}}, \x, t)$---a single coordinate of $\basis^\Aset_t$. The reach-avoid specification $\varphi_{\mathrm{ra}} = \pEventually_{[0,K_g]}\mu_{\mathrm{goal}} \wedge \pAlways_{[0,K_c]}\mu_{\mathrm{clear}}$ has decoder $\dec_{\varphi_{\mathrm{ra}}}(\basis^\Aset_t) = \min\bigl\{\rho(\pEventually_{[0,K_g]}\mu_{\mathrm{goal}},\x,t),\,\rho(\pAlways_{[0,K_c]}\mu_{\mathrm{clear}},\x,t)\bigr\}$---a $\min$ of two basis coordinates.
\end{example}

\paragraph{Quantiles:}
Given calibration scores $S_1,\dots,S_n$, let
$$\qhat_{1-\alpha}(S_{1:n}) := S_{(\min\{n,\lceil (n+1)(1-\alpha)\rceil\})}$$
denote the split-conformal quantile. With $\widetilde C := \qhat_{1-\alpha}$ of the fragment-wide scores, the runtime lower bound on coordinate $\ell$ is $\underline \basis_{t,\ell} := \widehat \basis_{t,\ell} - \widetilde C\,\sigma_\ell$.

\begin{lemma}[Shared conformal bound]
\label{lem:shared_event}
If $\underline \basis_t \le \basis_t$ coordinatewise, then $\dec_\varphi(\underline \basis_t) \le \rho(\varphi,x,t)$ for all $\varphi \in \SpecFam$, by monotonicity of $\dec_\varphi$.
\end{lemma}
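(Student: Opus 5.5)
The argument combines monotonicity of each decoder with exactness of decoding on the true basis. Fix $\varphi\in\SpecFam$ and suppose $\underline\basis_t\le\basis_t$ coordinatewise. By Theorem~\ref{thm:minimal_basis}(i) for $\basis^\Aset_t$, or Proposition~\ref{prop:universal} for $\basis^\PredLib_t$, there is a decoder $\dec_\varphi$ that is monotone (nondecreasing in each coordinate) and satisfies $\dec_\varphi(\basis_t)=\rho(\varphi,\x,t)$ for every signal $\x$ and every valid $t\ge\Kmax$. Monotonicity applied to $\underline\basis_t\le\basis_t$ gives $\dec_\varphi(\underline\basis_t)\le\dec_\varphi(\basis_t)$. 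Substituting the exact-decoding identity yields the claim.

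Monotonicity is the only property that needs checking, and I will verify it by structural induction over the parse tree of $\varphi$, following the recursive definition of $\dec_\varphi$ in the proof of Theorem~\ref{thm:minimal_basis}(i).
\begin{itemize}
\item Base case: $\dec_{a_q}(b)=b_q$ is a coordinate projection, so $b\le b'$ implies $b_q\le b'_q$.
\item Inductive step: if $\dec_{\varphi_1}$ and $\dec_{\varphi_2}$ are monotone, then $\min\{\dec_{\varphi_1},\dec_{\varphi_2}\}$ and $\max\{\dec_{\varphi_1},\dec_{\varphi_2}\}$ are monotone, because $\min$ and $\max$ are nondecreasing in each argument.
\end{itemize}
For the predicate-history basis the same induction works, with the additional operators $\inf$ and $\sup$ over finite index windows. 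These are also nondecreasing in each argument.

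The conclusion must hold for all $\varphi\in\SpecFam$ at once. This requires no union bound, since the hypothesis $\underline\basis_t\le\basis_t$ is a single event that does not depend on $\varphi$. Once that event holds, the implication is deterministic for every formula simultaneously. This is the feature exploited later to obtain fragment-wide validity from one calibration quantile $\widetilde C$.

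The main obstacle is bookkeeping rather than difficulty. The exact-decoding identity $\dec_\varphi(\basis_t)=\rho(\varphi,\x,t)$ must hold uniformly over signals, which is precisely the definition of ``supports''. Without this uniformity, evaluating $\dec_\varphi$ at the perturbed point $\underline\basis_t$ and comparing with the true value would not be justified. I will also note that $\underline\basis_t$ need not equal the basis of any actual signal. This causes no problem, because the decoders are defined on all of $\R^r$ (resp. $\R^{m(\Kmax+1)}$).
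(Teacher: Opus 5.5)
Your proposal is correct and follows the paper's argument: monotonicity gives $\dec_\varphi(\underline\basis_t)\le\dec_\varphi(\basis_t)$, and exact decoding from Theorem~\ref{thm:minimal_basis}(i) (or Proposition~\ref{prop:universal}) identifies the right-hand side with $\rho(\varphi,\x,t)$. Your structural induction re-derives monotonicity that those results already assert, so it is redundant but harmless.
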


Lemma~\ref{lem:shared_event} is the key to reusability: conformal bounds on the basis elements $\basis_{t,\ell}$ are sufficient to certify every formula in the fragment $\SpecFam$ simultaneously, without a union bound.

\medskip

\begin{figure*}[t]
    \centering
    \includegraphics[width=13cm]{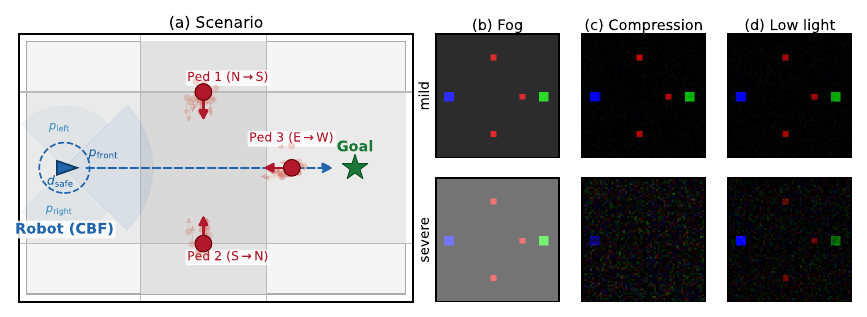}
    \vspace{-10pt}
    \caption{(a)~Crossroad scenario: robot (blue) navigates toward the goal (green) via a CBF controller while three pedestrians converge. Cones show predicate sectors ($45^\circ$ half-angle); dashed circle: $d_{\mathrm{safe}}{=}1.0$\,m.
    (b--d)~$64{\times}64$ observations under image nuisances (fog, compression, noise).}
    \label{fig:scenario}
    \vspace{-10pt}
\end{figure*}

\noindent\textbf{Temporal aggregation:}
The choice of score determines the strength of the guarantee. \emph{Level-1} uses the episode-wise maximum $S^{(i)} := \max_t s^{\SpecFam}(t)$, yielding a bound valid uniformly over all valid times $t\ge\Kmax$ and all $\varphi\in\SpecFam$ within a test episode. \emph{Level-2} samples one time $\tau_i\sim\mathrm{Unif}\{\Kmax,\dots,T_i\}$ per episode and sets $S^{(i)} := s^{\SpecFam}(\tau_i)$, giving a random-time guarantee at lower conservatism. We evaluate both levels experimentally.

\begin{theorem}[Simultaneous validity]
\label{thm:validity}
Under exchangeable episodes, with $\underline \basis_{t,\ell}:=\widehat \basis_{t,\ell}-\widetilde C\,\sigma_\ell$ and $\widetilde C=\qhat_{1-\alpha}(S_{1:n})$:
\begin{enumerate}[label=(\roman*),leftmargin=1.5em,topsep=2pt,itemsep=0pt]
    \item \emph{(Level-2)} $\Prob(\forall\varphi\in\SpecFam\colon\dec_\varphi(\underline \basis_\tau)\le\rho(\varphi,x,\tau))\ge 1{-}\alpha$.
    \item \emph{(Level-1)} $\Prob(\forall t\ge\Kmax,\,\forall\varphi\in\SpecFam\colon\dec_\varphi(\underline \basis_t)\le\rho(\varphi,x,t))\ge 1{-}\alpha$.
\end{enumerate}
\end{theorem}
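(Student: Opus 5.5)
The proof reduces both parts to a single standard split-conformal coverage statement about the scalar scores $S^{(i)}$, and then transfers it to all formulas via Lemma~\ref{lem:shared_event}.

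\emph{Step 1 (from the score to coordinatewise bounds).} For the test episode, let $S^{(0)}$ be its score, built with the same temporal aggregation as the calibration scores: $s^{\SpecFam}(\tau)$ at the sampled time for Level-2, and $\max_{t\ge\Kmax}s^{\SpecFam}(t)$ for Level-1. Fix a time $t$ with $s^{\SpecFam}(t)\le\widetilde C$. By \eqref{eq:fragment_wide_score}, every coordinate $\ell$ satisfies $e_{t,\ell}\le \widetilde C\sigma_\ell$. This gives $\widehat\basis_{t,\ell}-\basis_{t,\ell}\le\max(0,\widehat\basis_{t,\ell}-\basis_{t,\ell})\le\widetilde C\sigma_\ell$, hence $\underline\basis_{t,\ell}\le\basis_{t,\ell}$ for all $\ell$. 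Lemma~\ref{lem:shared_event} then yields $\dec_\varphi(\underline\basis_t)\le\rho(\varphi,\x,t)$ for every $\varphi\in\SpecFam$ simultaneously.

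Consequently, the event $\{S^{(0)}\le\widetilde C\}$ is contained in the event in (i) in the Level-2 case. In the Level-1 case, $S^{(0)}\le\widetilde C$ forces $s^{\SpecFam}(t)\le\widetilde C$ for every valid $t$, so the event is contained in the event in (ii). No union bound over formulas or times is needed, because everything is controlled by one scalar.

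\emph{Step 2 (conformal coverage).} It remains to show $\Prob(S^{(0)}\le\qhat_{1-\alpha}(S_{1:n}))\ge1-\alpha$.

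\begin{itemize}
\item \emph{Exchangeability of the scores.} The encoder $\enc_\theta$ and the scalings $\sigma_\ell$ are fixed, since they are chosen on training data independent of calibration and test. Each $S^{(i)}$ is therefore the same measurable function applied to episode $i$: for Level-2 this includes the episode's own independent uniform time draw $\tau_i$ (the test time is also uniform over its valid range). Exchangeability of the episodes then transfers to exchangeability of $(S^{(0)},S^{(1)},\dots,S^{(n)})$.
\item \emph{Rank argument.} The rank of $S^{(0)}$ among the $n+1$ scores is uniform, up to ties broken conservatively. Hence $\Prob(S^{(0)}\le S_{(k)})\ge k/(n+1)$ with $k=\lceil(n+1)(1-\alpha)\rceil$, which is at least $1-\alpha$.
\item \emph{Truncated index.} When $k>n$, the truncation $\min\{n,k\}$ in the definition of $\qhat$ cannot be justified this way. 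Either assume $n\ge\lceil(n+1)(1-\alpha)\rceil$, i.e.\ $\alpha\ge 1/(n+1)$, or use the convention $\qhat=+\infty$ in that case. I would state this assumption explicitly.
\end{itemize}

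The main obstacle is Step 2's exchangeability bookkeeping. For Level-2, the uniform time must be drawn identically for the test episode, even though episode lengths $T_i$ differ. For Level-1, the test score must be the maximum over exactly the same range of valid times used during calibration. The deterministic reduction in Step 1 is routine given Lemma~\ref{lem:shared_event}.
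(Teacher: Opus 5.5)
Your proposal is correct and takes the same route as the paper: split-conformal coverage of the single scalar score, $S^{(\mathrm{test})}\le\widetilde C$, forces $\underline\basis_t\le\basis_t$ coordinatewise at the relevant time(s), and Lemma~\ref{lem:shared_event} then transfers this to every formula, with your version writing out the exchangeability and rank bookkeeping that the paper leaves implicit. Your caveat on the truncated index is well taken: the paper's one-line proof also silently needs $\alpha\ge 1/(n+1)$ (otherwise $\qhat=S_{(n)}$ only guarantees coverage $n/(n+1)<1-\alpha$), and it needs $\enc_\theta$ and the $\sigma_\ell$ to be fixed independently of the calibration and test episodes, which you should state as an assumption rather than as a fact.
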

\begin{proof}
Split conformal calibration on the exchangeable scores $\{S^{(i)}\}$ yields $\Prob(S^{(\mathrm{test})}\le\widetilde C)\ge 1{-}\alpha$, hence $\underline \basis\le \basis$ coordinatewise at the relevant time(s). Apply Lemma~\ref{lem:shared_event}.
\end{proof}

Since $s^\varphi(t)\le s^{\SpecFam}(t)$, restricting to the active support of a queried formula always yields a tighter or equal bound, at the cost of certifying only that formula rather than the whole fragment. Thus, $s^\varphi$ requires recalibrating when the query formula changes.

We write $q_\varphi := \qhat_{1-\alpha}(s^\varphi_{1:n})$ for the formula-specific conformal radius obtained from active-support scoring.

\section{MONITOR ARCHITECTURES}
\label{sec:arch}

All monitor variants share the same perception backbone (a CNN encoder mapping an observation window to a $128$-dimensional latent vector) and differ only in the prediction target and where conformal calibration is applied relative to temporal composition (see Fig.~\ref{fig:arch}). The rolling monitor predicts $m$ values per step (calibrated before composition; supports full ptSTL). The semantic-basis monitor predicts $2m|\Iset|$ values (calibrated after composition; supports $\SpecFam(\Aset)$).

\smallskip

\paragraph{Semantic-Basis Prediction:}
For the fragment induced by the atomic dictionary $\Aset$, the monitor predicts the semantic basis $\widehat\basis^\Aset_t \in \R^{2m|\Iset|}$.
By Theorem~\ref{thm:minimal_basis}, this is the minimum sufficient statistic for reusable monitoring over $\SpecFam(\Aset)$. Any queried formula is decoded by a deterministic $\min$/$\max$ tree derived from its parse tree---no per-formula training is required.

\smallskip

\paragraph{Rolling Prediction:}
The rolling monitor predicts only the current predicate vector $\hat\mu(t) \in \R^m$ (estimated from the observation window $o_{t-H+1:t}$) and accumulates predictions in a streaming buffer that reconstructs the predicate window $\widehat\basis^\PredLib_t$ online. Formula evaluation then applies the same interval-arithmetic decoder as any window-based monitor. This produces a larger representation than the semantic basis ($m(K_{\max}{+}1)$ vs. $2m|\Iset|$ entries for $\SpecFam(\Aset)$), but is easier to learn: the head solves a per-timestep regression ($m$ outputs) rather than predicting temporal aggregates ($2m|\Iset|$ outputs). Since the predicate window $\basis^\PredLib_t$ supports the full bounded-horizon ptSTL fragment $\SpecFam(\PredLib)$, the rolling monitor can certify any formula in $\SpecFam(\PredLib) \supseteq \SpecFam(\Aset)$. For the target fragment $\SpecFam(\Aset)$, this representation is sufficient but not minimal; the semantic basis provides a tighter interface.

\smallskip

\paragraph{Pre- vs.\ Post-Composition Calibration:}
A key distinction is whether conformal calibration is applied \emph{before} or \emph{after} temporal composition. The rolling monitor calibrates before: the conformal radius $q_\varphi$ is computed on raw per-timestep prediction errors, then propagated through the temporal $\min$/$\max$ operators of the STL formula. As the horizon grows, the score must protect against the worst error across more temporal lags, so $q_\varphi$ increases with $|\suppU(\varphi)|$. The semantic-basis monitor calibrates after: it predicts temporal aggregates directly, so $q_\varphi$ is computed on the aggregated output. Post-composition calibration avoids the horizon penalty, making $q_\varphi$ nearly insensitive to temporal depth, but at the cost of a harder prediction problem. Both architectures are encoder-agnostic: the prediction heads and conformal calibration depend only on the latent dimension, not the encoder architecture. A pretrained vision backbone (e.g., a ViT) could replace the CNN with only the head retrained.

\section{EXPERIMENTS}\label{sec:experiments}

We demonstrate that the optimal architecture depends on both the domain and the calibration level. On simulated data, a horizon-dependent crossover occurs: rolling wins at short horizons, semantic at long. On real-world driving data (Section~\ref{subsect:womd}), semantic dominates at all horizons under Level-2 calibration. Under the stronger Level-1 guarantee, rolling recovers the advantage on both benchmarks. Both architectures decisively outperform a Bonferroni-corrected observer baseline (Section~\ref{sec:observer_baseline}) on every formula tested.

\begin{figure*}[t]
    \centering
    \includegraphics[width=0.85\textwidth]{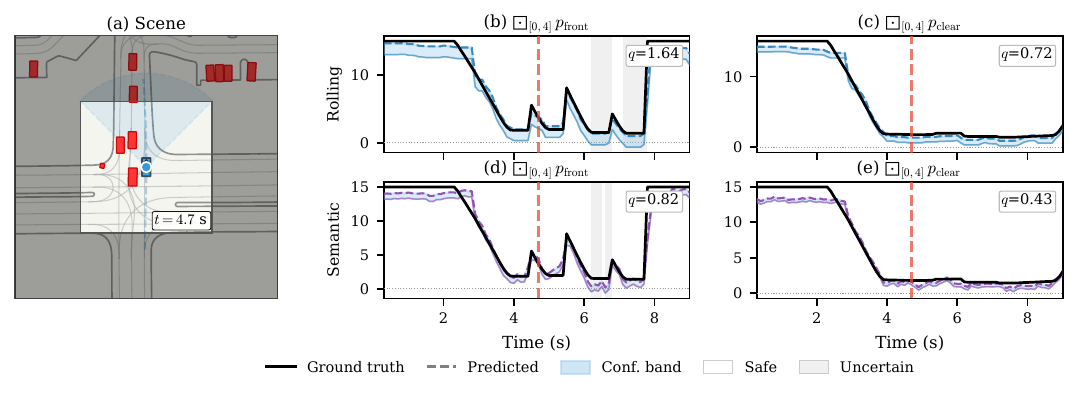}
    \vspace{-15pt}
\caption{%
Rolling and semantic monitors on a WOMD scenario ($88$ steps, $8.8$\,s).
(a)~Bird's-eye view of the ego vehicle (blue) and surrounding agents (red); inset: $30$\,m monitoring viewport.
(b--c)~Rolling monitor for $\pAlways_{[0,4]}p_{\mathrm{front}}$ and $\pAlways_{[0,4]}p_{\mathrm{clear}}$.
(d--e)~Semantic monitor for the same specifications. Ground truth: black; prediction: dashed; conformal band: blue; white $=$ \safe{}, gray $=$ \uncertain{}. Animated version: \href{https://youtu.be/B4XxOAdq9HI}{video}.%
} 
    \label{fig:WOMD}
    \vspace{-10pt}
\end{figure*}

\subsection{Crossroad Scenario}

A CBF-controlled robot navigates a pedestrian crossroad~\cite{IROS2024} (Fig.~\ref{fig:scenario}). The monitor observes $64{\times}64$ overhead images and predicts $m{=}7$ safety predicates (clearance, directional clearances, front margin, goal reach, speed margin) with $\Kmax{=}16$ and intervals
\[
\Iset=\{[0,1],[0,2],[0,4],[0,8],[0,16]\}.
\]

The rolling monitor predicts $7$ values per step; the semantic monitor predicts $70$ basis atoms. Both share a CNN encoder ($128$-dim latent, $H{=}4$ frame history; architecture details in the appendix). The crossroad dataset has $5{,}000$ training, $1{,}000$ calibration, and $500$ test episodes. Ground-truth predicates are computed from full state; at deployment, only the calibration set requires state access. All results use one-sided scoring with $\alpha{=}0.10$. All conformal radii in TABLE~\ref{tab:full_results} use formula-specific active-support scoring ($q_\varphi$): calibration residuals are stored once, and $q_\varphi$ is recomputed at query time from the cached scores. Changing the queried formula requires no new data collection or model inference.

\paragraph{Conformal Tightness:}
  Fig.~\ref{fig:crossover_plot} shows the conformal radius $q_\varphi$ vs.\ horizon $K$ for $\varphi=\pAlways_{[0,K]}p_f$: semantic's radius remains roughly constant while rolling's inflates steadily, driven by the support-size penalty that post-composition calibration avoids.
  Rolling is initially tighter below $K{\approx}3$ due to decoder complexity being kept equal but having to predict fewer values.
  By $K{=}16$, rolling reaches $q_\varphi{=}2.25$ while semantic's remains at $q_\varphi{=}0.56$---a 4-times gap (TABLE~\ref{tab:full_results}).

\begin{figure}[htb]
    \centering
    \begin{tikzpicture}
    \begin{axis}[
    width=0.82\columnwidth, height=3.2cm,
    xlabel={Horizon $K$}, ylabel={$q_\varphi$},
    legend pos=north west, legend style={font=\scriptsize},
    xtick={1,2,4,8,16}, xmode=log, log basis x=2,
    ymode=log, ymin=0.15, ymax=3, log basis y=4, grid=major,
    ytick={0.2, 0.5, 1, 2}, yticklabels={0.2, 0.5, 1, 2},
    label style={font=\small}, tick label style={font=\small},
    ]
    \addplot[color=myblue, mark=square*, thick] coordinates {(1,.206)(2,.263)(4,.380)(8,.773)(16,2.25)};
    \draw[dashed, gray, thin] (axis cs:3,0.15) -- (axis cs:3,3)
        node[pos=0.85, right, font=\tiny, text=gray] {$K{\approx}3$};
    \addlegendentry{Rolling}
    \addplot[color=myred, mark=triangle*, thick] coordinates {(1,.332)(2,.290)(4,.388)(8,.553)(16,.562)};
    \addlegendentry{Semantic}
    \end{axis}
    \end{tikzpicture}
    \vspace{-8pt}
    \caption{Conformal radius $q_\varphi$ vs.\ horizon $K$ for $\pAlways_{[0,K]}p_f$ (crossroad, Level-2).}
    \vspace{-15pt}
    \label{fig:crossover_plot}
\end{figure}
\paragraph{Observer Baseline:}\label{sec:observer_baseline}
We compare against an observer-style baseline~\cite{deshmukh2017robust,zhong2021extending,finkbeiner2022truly,lindemann2023conformal}. Using the same encoder, the baseline predicts per-predicate values, constructs symmetric conformal intervals, and propagates them through interval STL semantics. To ensure a valid $\alpha$-level guarantee, we apply a Bonferroni correction over the active predicate-lag support $\suppU(\varphi)$. The baseline provides the same coverage guarantees as Level-2, but with far looser radii due to the union bound (TABLE~\ref{tab:full_results}). Level-1 provides a stronger episodewise guarantee at the cost of larger quantiles.

\subsection{Real-World Validation: Waymo Open Motion Dataset}
\label{subsect:womd}

On the Waymo Open Motion Dataset (WOMD, v1.3.1)~\cite{Ettinger_2021_ICCV,Kan_2024_ICRA}, each scenario provides $8.8$\,s ($88$ timesteps at $10$\,Hz). We render $64{\times}64$ bird's eye view images and extract $m{=}7$ predicates (see TABLE~\ref{tab:full_results}), using the same encoder and $50{,}000$ training, $1{,}066$ calibration, $567$ test scenarios ($49{,}896$ timesteps). Calibration and test scenarios are drawn as disjoint random subsets of the \emph{validation\_interactive} split; exchangeability is assumed under i.i.d.\ sampling within the split.
To address distribution shift within the dataset (e.g., across geographic regions or weather conditions), robust conformal methods~\cite{zhao2024robust} can be applied.

\begin{table*}[t]
\centering
\small
\setlength{\tabcolsep}{3pt}
\setlength{\abovecaptionskip}{6pt}
\setlength{\belowcaptionskip}{4pt}
\caption{Results under Level-2 (random-time) and Level-1 (episodewise) calibration ($\alpha{=}0.10$).
CSR: certified safe rate. Prec: precision (fraction of safe certifications that are correct). FPR: false-positive rate. GT: ground-truth safe rate (``---'' when GT${=}100\%$).}
\vspace{-10pt}
\label{tab:full_results}
\resizebox{\textwidth}{!}{%
\begin{tabular}{@{}l c cccc cccc cccc cc cc @{}}
\toprule
& & & & & & \multicolumn{8}{c}{Level-2} & \multicolumn{4}{c}{Level-1$^*$} \\
\cmidrule(lr){7-14} \cmidrule(lr){15-18}
& & \multicolumn{4}{c}{Observer Baseline} & \multicolumn{4}{c}{Semantic} & \multicolumn{4}{c}{Rolling} & \multicolumn{2}{c}{Semantic} & \multicolumn{2}{c}{Rolling} \\
\cmidrule(lr){3-6} \cmidrule(lr){7-10} \cmidrule(lr){11-14} \cmidrule(lr){15-16} \cmidrule(lr){17-18}
Specification $\varphi$ & GT\%
& $q_\varphi$\,$\downarrow$ & CSR\,$\uparrow$ & Prec\,$\uparrow$ & FPR\,$\downarrow$
& $q_\varphi$\,$\downarrow$ & CSR\,$\uparrow$ & Prec\,$\uparrow$ & FPR\,$\downarrow$
& $q_\varphi$\,$\downarrow$ & CSR\,$\uparrow$ & Prec\,$\uparrow$ & FPR\,$\downarrow$
& $q_\varphi$\,$\downarrow$ & CSR\,$\uparrow$
& $q_\varphi$\,$\downarrow$ & CSR\,$\uparrow$ \\
\midrule
\rowcolor{gray!20}
\multicolumn{18}{@{}l}{\textbf{Crossroad}} \\
\multicolumn{18}{@{}l}{\emph{Horizon scaling ($p_f$)}} \\
\rowcolor{gray!10}
$\pAlways_{[0,1]}p_f$  & 100 & 1.36 & 86.7 & 100 & --- & .33 & 98.9 & 99.9 & --- & .21 & 98.6 & 99.9 & --- & 5.78 & 54.2 & 6.67 & 53.0 \\
$\pAlways_{[0,4]}p_f$  & 100 & 3.76 & 55.9 & 100 & --- & .39 & 96.7 & 99.9 & --- & .38 & 95.8 & 99.9 & --- & 5.82 & 49.8 & 6.67 & 49.1 \\
\rowcolor{gray!10}
$\pAlways_{[0,16]}p_f$ & 100 & 6.68 & 36.2 & 100 & --- & .56 & 87.4 & 99.7 & --- & 2.25 & 60.0 & 99.9 & --- & 5.61 & 36.5 & 6.67 & 36.2 \\
\addlinespace
\multicolumn{18}{@{}l}{\emph{Compound}} \\
\rowcolor{gray!10}
$\pAlways_{[0,4]}p_f{\wedge}\pAlways_{[0,4]}p_l$ & 98 & 5.62 & 12.5 & 99.9 & 0.1 & 1.00 & 76.1 & 99.8 & 0.1 & 1.12 & 72.4 & 99.9 & 0.1 & 6.46 & 8.1 & 7.47 & 5.6 \\
\addlinespace
\multicolumn{18}{@{}l}{\emph{Eventually}} \\
\rowcolor{gray!10}
$\pEventually_{[0,4]}p_f$  & 100 & 3.76 & 68.5 & 100 & --- & .28 & 99.7 & 100 & --- & .38 & 99.9 & 100 & --- & 5.89 & 57.6 & 6.67 & 56.3 \\
\midrule[0.08em]
\rowcolor{gray!20}
\multicolumn{18}{@{}l}{\textbf{WOMD}} \\
\multicolumn{18}{@{}l}{\emph{Horizon scaling ($p_f$)}} \\
\rowcolor{gray!10}
$\pAlways_{[0,1]}p_f$  & 100 & 2.01 & 90.2 & 100 & --- & .81 & 98.1 & 100 & --- & .91 & 98.1 & 100 & --- & 5.96 & 70.7 & 4.27 & 78.8 \\
$\pAlways_{[0,4]}p_f$  & 100 & 3.00 & 79.2 & 100 & --- & .82 & 96.9 & 100 & --- & 1.64 & 91.0 & 100 & --- & 6.36 & 64.1 & 4.27 & 75.4 \\
\rowcolor{gray!10}
$\pAlways_{[0,16]}p_f$ & 100 & 5.49 & 46.4 & 100 & --- & 1.91 & 85.5 & 100 & --- & 2.94 & 74.4 & 100 & --- & 8.78 & 40.7 & 4.27 & 65.5 \\
\addlinespace
\multicolumn{18}{@{}l}{\emph{Safety-critical predicates ($K{=}4$)}} \\
\rowcolor{gray!10}
$\pAlways_{[0,4]}p_s$  & 96 & 7.74 & 30.5 & 99.8 & 1.9 & 2.11 & 85.0 & 98.5 & 31.9 & 2.25 & 71.9 & 99.4 & 10.4 & 6.02 & 52.0 & 5.29 & 46.1 \\
$\pAlways_{[0,4]}p_\tau$ & 62 & 7.29 & 0.1 & 75.9 & 0.1 & 1.92 & 32.6 & 91.4 & 7.4 & 2.03 & 16.3 & 96.7 & 1.4 & 5.00 & 4.3 & 4.15 & 3.0 \\
\rowcolor{gray!10}
$\pAlways_{[0,4]}p_h$ & 40 & 13.22 & 0.9 & 99.6 & 0.0 & 3.88 & 21.2 & 94.3 & 2.0 & 4.93 & 14.1 & 98.7 & 0.3 & 11.85 & 3.6 & 11.00 & 3.8 \\
\addlinespace
\multicolumn{18}{@{}l}{\emph{Compound}} \\
\rowcolor{gray!10}
$\pAlways_{[0,4]}p_f{\wedge}\pAlways_{[0,4]}p_l{\wedge}\pAlways_{[0,4]}p_r$ & 100 & 5.80 & 42.8 & 100 & --- & 1.59 & 76.8 & 100 & --- & 2.29 & 68.4 & 100 & --- & 8.11 & 32.6 & 5.82 & 43.4 \\
$\pAlways_{[0,4]}p_f{\wedge}\pAlways_{[0,4]}p_\tau$ & 62 & 7.29 & 0.1 & 96.0 & 0.1 & 2.25 & 25.3 & 92.8 & 4.8 & 2.58 & 9.5 & 97.2 & 0.7 & 6.66 & 0.5 & 5.11 & 0.6 \\
\rowcolor{gray!10}
$\pAlways_{[0,4]}p_s{\wedge}\pAlways_{[0,4]}p_\tau{\wedge}\pAlways_{[0,4]}p_h$ & 34 & 13.22 & 0.9 & 100 & 0.0 & 4.48 & 2.6 & 94.9 & 0.2 & 5.31 & 0.2 & 100 & 0.0 & 11.94 & 0.0 & 11.02 & 0.0 \\
\addlinespace
\multicolumn{18}{@{}l}{\emph{Eventually}} \\
\rowcolor{gray!10}
$\pEventually_{[0,4]}p_f$  & 100 & 3.00 & 91.3 & 100 & --- & .81 & 99.2 & 100 & --- & 1.64 & 97.2 & 100 & --- & 4.99 & 81.7 & 4.27 & 84.7 \\
$\pEventually_{[0,4]}p_\tau$ & 72 & 7.29 & 0.6 & 99.3 & 0.0 & 2.01 & 42.2 & 93.5 & 9.8 & 2.03 & 33.1 & 94.9 & 6.0 & 5.03 & 6.3 & 4.15 & 8.7 \\
\bottomrule
\end{tabular}}\\[2pt]
{\footnotesize $^*$Level-1 omits Prec and FPR (Prec $100\%$, FPR $0$ for all specs).}
\vspace{-10pt}
\end{table*} 

TABLE~\ref{tab:full_results} reports both architectures on both benchmarks under Level-2 and Level-1 calibration. Fig.~\ref{fig:WOMD} shows a single WOMD scenario; \safe{} (white) and \uncertain{} (gray) regions are separated by the zero crossing of the conformal lower bound $\hat\mu - q_\varphi$.

\textit{1. Semantic is uniformly tighter.} At Level-2, semantic achieves a tighter conformal radius at every horizon on WOMD ($q_\varphi{=}0.81$ vs.\ $0.91$ at $K{=}1$; $q_\varphi{=}1.91$ vs.\ $2.94$ at $K{=}16$). Unlike in the crossroad experiment, rolling shows no initial advantage, likely because the learnability gap between the architectures is smaller on real-world data.

\textit{2. Soundness and conservatism.} Both monitors are empirically sound: empirical coverage stays above $1{-}\alpha{=}90\%$ on all specifications, confirming the coverage guarantee of Problem~\ref{prob:main}(i). The key distinction is conservatism: semantic certifies substantially more timesteps (e.g., $32.6\%$ vs.\ $16.3\%$ CSR on $\pAlways_{[0,4]}p_\tau$) because post-composition calibration yields a tighter $q_\varphi$.

\textit{3. Liveness and compound formulas.} Switching from $\pAlways$ to $\pEventually$ recovers substantial CSR: front clearance rises from $91\%$ to $97\%$ (rolling) and from $97\%$ to $99\%$ (semantic). The compound lane-change rule $\pAlways_{[0,4]}p_f{\wedge}\pAlways_{[0,4]}p_l{\wedge}\pAlways_{[0,4]}p_r$ certifies $77\%$ (semantic) and $68\%$ (rolling) of timesteps with zero false certifications.

\section{CONCLUSION}
\label{sec:conclusion}

We presented a framework for certified reusable monitoring from vision. The semantic-basis monitor predicts the minimal representation needed to decode every formula in the target fragment via a monotone, 1-Lipschitz decoder, enabling fragment-wide certification from a single conformal calibration pass. The rolling monitor trades this minimality for a simpler per-step learning problem by calibrating before temporal composition.
Both architectures outperform a Bonferroni-corrected observer baseline on every tested formula. Their ranking depends on the domain and calibration level: on crossroad, a crossover occurs near $K{\approx}3$, with rolling tighter at short horizons and semantic tighter at long horizons (TABLE~\ref{tab:full_results}). The framework is
  encoder-agnostic. The semantic basis is also amenable to specification mining~\cite{hoxha2018mining}. the monotone decoder structure allows the predicted atom values to directly reveal which specifications are satisfied without enumerating the fragment. Future work includes richer modalities and adaptive conformal methods~\cite{gibbs2021adaptive}.


\bibliographystyle{IEEEtran}
\bibliography{library}


\end{document}